\definecolor{royalpurple}{rgb}{0.47,0.32,0.66}
\newtheorem{definition}{Definition}
\newtheorem{theorem}{Theorem}
\title{Topology Generation of UAV Covert Communication Networks: A Graph Diffusion Approach with Incentive Mechanism}
\author{
\IEEEauthorblockN{
Xin Tang\textsuperscript{1}, 
Qian Chen\textsuperscript{2},
Fengshun Li\textsuperscript{1},
Youchun Gong\textsuperscript{1},
Yinqiu Liu\textsuperscript{3},
Wen Tian\textsuperscript{4},
Shaowen Qin\textsuperscript{1},
Xiaohuan Li\textsuperscript{1}\textsuperscript{*}
}

\IEEEauthorblockA{\textsuperscript{1}Guangxi University Key Laboratory of Intelligent Networking and Scenario System (\\
School of Information and Communication), Guilin University of Electronic Technology, Guilin, China}

\IEEEauthorblockA{\textsuperscript{2}School of Architecture and Transportation Engineering, Guilin University of Electronic Technology, Guilin, China}

\IEEEauthorblockA{\textsuperscript{3}College of Computing and Data Science, Nanyang Technological University, Singapore}

\IEEEauthorblockA{\textsuperscript{4}School of Electronic and Information Engineering, Nanjing University of Information Science and Technology, Nanjing, China}

\IEEEauthorblockA{
Email: \{tangx, chenqian\}@mails.guet.edu.cn, \{lfsguet, gycguet\}@163.com, yinqiu001@ntu.edu.sg,\\
csusttianwen@163.com, qinsw@mails.guet.edu.cn, *lxhguet@guet.edu.cn}
}
\begin{document}

\maketitle
\begin{abstract}
With the growing demand for Uncrewed Aerial Vehicle (UAV) networks in sensitive applications, such as urban monitoring, emergency response, and secure sensing, ensuring reliable connectivity and covert communication has become increasingly vital. However, dynamic mobility and exposure risks pose significant challenges. To tackle these challenges, this paper proposes a self-organizing UAV network framework combining Graph Diffusion-based Policy Optimization (GDPO) with a Stackelberg Game (SG)-based incentive mechanism. The GDPO method uses generative AI to dynamically generate sparse but well-connected topologies, enabling flexible adaptation to changing node distributions and Ground User (GU) demands. Meanwhile, the Stackelberg Game (SG)-based incentive mechanism guides self-interested UAVs to choose relay behaviors and neighbor links that support cooperation and enhance covert communication. Extensive experiments are conducted to validate the effectiveness of the proposed framework in terms of model convergence, topology generation quality, and enhancement of covert communication performance.

\end{abstract}

\begin{IEEEkeywords}
 uncrewed aerial vehicle (UAV), topology generation, covert communication, graph diffusion (GD), Stackelberg game (SG), incentive mechanism
\end{IEEEkeywords}

\section{Introduction}
Uncrewed Aerial Vehicles (UAVs) are expected to be widely deployed for urban inspection, emergency rescue, and regional sensing applications \cite{tang2023digital}. Consequently, establishing an efficient self-organizing cooperative UAV network in a dynamic and distributed environment has become a central challenge in both industry and academia.

Compared to traditional fixed infrastructure, UAV networks must handle highly mobile nodes, easily disrupted communication links, and sudden tasks, while maintaining robust global connectivity under varying scales and operational conditions \cite{tian2023uav}. The authors in \cite{gaydamaka2023dynamic} proposed distributed algorithms for UAV network topology reorganization and maintenance without external positioning, supporting merging and disjoining in GNSS-denied environments. The authors in \cite{wang2023throughput} proposed a unified UAV covert communication framework that jointly optimizes 3D flight paths and transmission power with robust anti-detection, causality, and collision avoidance constraints. The authors in \cite{wang2408generative} proposed a secure ISAC framework employing two diffusion models to activate links and generate pilot-masked signals, mitigating unauthorized channel state information sensing. Therefore, designing a self-adaptive, scalable, and partition-resilient topology generation method is crucial to improving network coordination capabilities.

The connection patterns, communication links, and relay behaviors within UAV networks can easily be exploited by eavesdroppers to perform traffic analysis, location tracking, or link inference during sensitive tasks such as information collection and area monitoring \cite{chen2023covert}. To this end, the authors in \cite{tang2025dnn} proposed a two-stage UAV swarm system with pipeline deep neural network task assignment and flight path planning using a multi-agent generative diffusion model-assisted deep deterministic policy gradient approach. Unfortunately, excessive direct neighbors, overly centralized connections, or predictable relay behaviors make the network size and individual roles more easily inferred by external observers, potentially leading to information leakage or task disruption. The authors in \cite{tian2025contract} proposed a UAV-assisted contract-theoretic model to incentivize honest covert data transmission tasks and improve throughput under information asymmetry constraints. However, existing approaches often neglect UAV network topology optimization and cannot ensure persistent connection. Therefore, beyond ensuring high connectivity, reducing the likelihood of node exposure to enhance the covert performance of multi-UAV cooperative communication has become an increasingly important requirement for UAV networks.

To address the dual challenges of connectivity and communication covertness in dynamic UAV networks, this paper proposes a self-organizing network framework that integrates Graph Diffusion-based Policy Optimization (GDPO) with a Stackelberg Game (SG)-based incentive mechanism. Leveraging the data generation capabilities of generative Artificial Intelligence \cite{liu2024cross}, the GDPO method dynamically constructs network topologies with high connectivity and sparsity, allowing the network to flexibly adapt to dynamic node distributions and varying demands \cite{liu2024graph}. Furthermore, a SG-based incentive mechanism is designed to guide UAV nodes to select appropriate relay behaviors and neighbor connections under self-interested conditions, thereby encouraging continuous cooperation and indirectly improving the covert characteristics of network communication.

The remainder of the paper is organized as follows. In Section \ref{sec:2}, we present the system model. A SG-based incentive mechanism for covert communication is proposed in Section \ref{sec:3}. Section \ref{sec:5} introduces the proposed approach based on GDPO for topology optimization. Section \ref{sec:6} illustrates the experimental results and analysis. Finally, Section \ref{sec:7} concludes this paper.


\section{System Model}
\label{sec:2}
Fig.~\ref{Fig01} illustrates a decentralized multi-UAV network designed to support high persistent connection and emergency networking of sensitive or covert data for Ground Users (GUs). The UAVs form a multi-hop communication topology to relay classified information from GUs to an aerial base station (such as an airship) while maintaining covert communication. The practical application scenarios include hydraulic and structural infrastructure detection (such as bridges and dams), transportation infrastructure detection, and emergency response in events such as earthquakes or traffic accidents near critical or restricted facilities. 

\subsection{Communication Model}
The communication link between the UAV and the GUs can be either Line-of-Sight (LoS) or Non-Line-of-Sight (NLoS). Without loss of generality, we adopt the Air-to-Ground (A2G) channel model proposed in~\cite{al2014optimal}. The probability of establishing a link between the UAV and GU \( i \) is given by:
\begin{equation}
\begin{cases}
p_{\text{LoS}} = \dfrac{1}{1 + \varphi \exp\left( -\lambda \left( \dfrac{180}{\pi} \tan^{-1} \left( \dfrac{h_j}{s_{(i,j)}} \right) - \varphi \right) \right)} \\
p_{\text{NLoS}} = 1 - p_{\text{LoS}}
\end{cases},
\label{eq:plos_joint}
\end{equation}
where \( \phi \) and \( \lambda \) are environment-related constants. \( h_j \) is the flying altitude of UAV \( j \). \( s_{(i,j)} \) represents the horizontal distance between the GU \( i \) and the UAV \( j \), calculated as \( s_{(i,j)} = \sqrt{(x_i - x_j)^2 + (y_i - y_j)^2} \). The 2D coordinate of the GU \( i \) is given by \( (x_i, y_i) \), where \( i \in \mathcal{I} \), \( \mathcal{I} = \{1, 2, \ldots, I\}\). \( (x_j, y_j) \) denotes the 2D coordinate of the UAV,  where \( j \in \mathcal{J} \), \( \mathcal{J} = \{1, 2, \ldots, J\}\).

In A2G environments, radio signals experience not only free-space path loss, but also additional attenuation caused by shadowing and multipath scattering effects. Since UAV deployment is typically based on long-term variations of the channel rather than instantaneous fluctuations~\cite{ghazzai2015optimized}, we focus on modeling the average path loss of the signal.
The path loss models for LoS and NLoS links (in dB) are given by $L_{\text{LoS}} = 20 \log \left( \frac{4\pi f_c d_{(i,j)}}{c} \right) + \varepsilon_{\text{LoS}}$ and $L_{\text{NLoS}} = 20 \log \left( \frac{4\pi f_c d_{(i,j)}}{c} \right) + \varepsilon_{\text{NLoS}}$,  respectively, where \( f_c \) is the carrier frequency, and \( d_{(i,j)} = \sqrt{h^2 + s_{(i,j)}^2} \) denotes the distance between the GU \(i\) and the UAV \( j \), \( \varepsilon \) represent the average additional loss. For analytical simplicity, the probabilistic average path loss is considered as $ L(h, s_{(i,j)}) = L_{\text{LoS}} \cdot p_{\text{LoS}} + L_{\text{NLoS}} \cdot p_{\text{NLoS}}$.
Given a transmission power \( P_{{tx_j}} \) of UAV $j$, the received power at the GU \( i \) is computed as
\begin{equation}
P_{r_i} = P_{{tx}_j} - L(h, s_{(i,j)}).
\label{eq:rxpower}
\end{equation}

To guarantee communication quality, \( P_{r_i} \) is needed to exceed a threshold \( P_{min} \). This requirement is equivalent to a path loss condition \( L(h, s_{(i,j)}) \leq L_{ {th}} \). The signal coverage radius is defined as \( \mathcal{R} = \mathbf{r} \big|_{L(h, s) = L_{ {th}}} \)~\cite{alzenad20173}.

\begin{figure}[tpb]
\centering
\includegraphics[width=8.8cm, height=7.2cm]{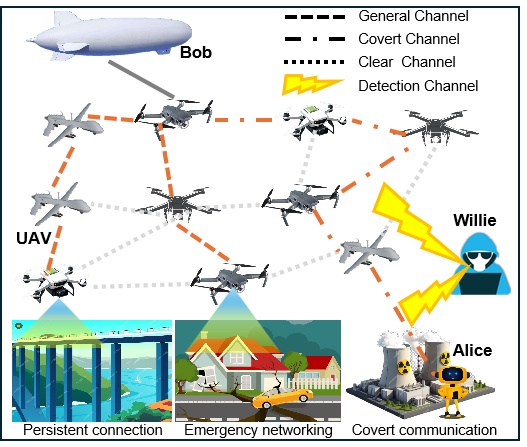}
\caption{System model.}
\label{Fig01}
\vspace{-0.6cm}
\end{figure}

\subsection{UAV Model}
To encourage participation in covert communication, each UAV receives a reward from GU, the message sender (such as Alice). The utility of UAV \( j \) is defined as
\begin{equation}
U_j = R_j - F_j,
\label{eq:utility}
\end{equation}
where $R_j=r_j\ln{\left(1+P_{{tx}_j}\right)}$ represents the reward received by the UAV from Alice, with $r_j$ denoting the unit reward, and $F_j=\phi_j{P_{{tx}_j}}$ denoting the transmission power consumption, and \( \phi_j \) is a constant. 

\subsection{Willie Model}
Willie decides whether Alice or the UAV is transmitting based on the received signal 
$\mathcal{Y}$, which is given by:
\begin{equation}
\mathcal{Y} =
\begin{cases}
n, & H_0, \\
P_{{tx}_j} g_j S_A + n, & H_1,
\end{cases}
\label{eq:hypothesis}
\end{equation}
where $n$ refers to additive white Gaussian noise with $(0, \sigma_{ {noise}}^2) $. $H_0$ denotes the null hypothesis, indicating that Alice does not transmit any signal. $H_1$ represents the alternative hypothesis, with an interaction occurring between Alice and the UAV. $S_A$ denotes the signal symbol transmitted by the UAV. $g_j$ represents the channel gain between the UAV and Willie. $\sigma_{noise}$ denotes the power of background noise.

Willie uses an energy detector to calculate \( |\mathcal{Y}|^2 \) and sets a threshold \( \epsilon \) for decision-making. Specifically, if \( |\mathcal{Y}|^2 > \epsilon \), it decides \( H_1 \). Otherwise, it decides \( H_0 \). Under each hypothesis, the statistical properties of \( |\mathcal{Y}|^2 \) are as follows: \( H_0 \) means that the signal contains only noise, \( \mathcal{Y} \sim \mathcal{N}(0, \sigma_{ {noise}}^2) \), that is \( |\mathcal{Y}|^2 \) follows an exponential or chi-squared distribution. In large-sample or high-SNR scenarios, it can be approximated by a Gaussian distribution via the central limit theorem. \( H_1 \) means that the signal energy is \( P_{{tx}_j} g_j^2 \), hence \( \mathcal{Y} \sim \mathcal{N}(P_{{tx}_j} g_j^2, \sigma_{ {noise}}^2) \), and likewise \( |\mathcal{Y}|^2 \) approximately follows a Gaussian distribution. The detection probability is $p_j = \mathbb{P}( {decide } H_1 \mid H_1) = \mathbb{P}(|\mathcal{Y}|^2 > \epsilon \mid H_1)$. Under the Gaussian approximation, the mean and variance of the detection statistic \( |\mathcal{Y}|^2 \) are $\mathbb{E}[|\mathcal{Y}|^2] = P_{{tx}_j} g_j^2 + \sigma_{ {noise}}^2 $ and $\mathbb{D}[|\mathcal{Y}|^2] \approx 2\sigma_{ {noise}}^4 + 4 P_{{tx}_j} g_j^2 \sigma_{ {noise}}^2$, respectively. The value $\mathcal{Z}=\epsilon - P_{{tx}_j} g_j^2$ comprehensively quantifies the antagonistic relationship between the concealment capability of UAV communication and the detection capability of Willie. To simplify the analysis, supposing that Willie uses coherent detection, the detection probability can be modeled as
\begin{equation}
p_j = Q\left({\mathcal{Z}}/{\sigma_{ {noise}}} \right),
\end{equation}
where \( Q(\cdot) \) is the standard normal distribution function.

\subsection{Alice Model}
Increasing the transmit power improves the communication throughput, but also increases the risk of being detected by Willie. To  prevent Willie from discovering the interaction between Alice and the UAV \( j \), both parties need to strike a balance between throughput and covertness. The utility of Alice is defined as
\begin{equation}
V_j = \mu \left(\psi \log_2 \left( 1 + {P_{{tx}_j} g_j}/{N_0} \right) - \omega p_j - R_j\right),
\label{eq:alice_utility}
\end{equation}
where \( N_0 \) denotes the channel gain of the environmental noise. \( \mu \), \( \psi \) and \( \omega \) are constants.

\section{SG-based Incentive Mechanism for Covert Communication}
\label{sec:3}
In practical scenarios, the GU and UAVs aim to maximize their respective utilities. However, due to their differing profit orientations and decision priorities, centralized optimization approaches often fail to address the resulting challenges effectively. The SG framework is well suited for this context, as it captures the GU’s dominant role as the system designer and the UAVs’ responsive behavior as service providers \cite{li2025satisfaction}. Within this hierarchical decision-making structure, the GU first establishes incentive strategies to guide UAV behavior, while the UAVs subsequently optimize their transmission power allocation based on the GU’s decisions. This sequential process ensures that the system reaches a stable and predictable equilibrium. Accordingly, the interaction between Alice and the UAVs is modeled as a SG. The SG-based incentive mechanism for the covert communication algorithm is summarized in Algorithm~\ref{alg01}. Alice, as the leader, determines the reward policy \( r_j \), while UAVs, as followers, respond by selecting their \( P_{{tx}_j} \). The SG is defined as $\Omega = \{ ({Alice} \cup \mathcal{J}), (P_{{tx}_j},r_j), (U_j, V_j) \}$, where \(({Alice} \cup \mathcal{J})\) represents the set of all members participating in SG, \((P_{{tx}_j}, r_j)\) denotes the policy set. We denote the optimal power vector provided by UAVs as \(P_{tx}^* = [P_{{tx}_1}^*, \ldots, P_{{tx}_j}^*, \ldots, P_{{tx}_J}^*]\), and the optimal reward as \(r^*\).

\begin{algorithm}[tbp]
	\caption{SG-based incentive mechanism for covert communication in UAV network}
	\label{alg01}
	\begin{algorithmic}[1] 
        \Require $g_i$, $\varphi_i$, $R_{max}$, $\sigma_{{noise}}^2$, $N_0$, $g_j$, $r_0$, $J$, $\zeta$
		\Ensure $P_{tx}^*$, $r^*$, $w^{\ast}$, $V(w^*)$
        \State Initialize the UAV combinations $\mathcal{W}$, $(P_{tx}^*,r^*)$ 
		\For{$w=1,\ldots,\mathcal{W}$}
        \For{each UAV $j$ in ${w}$}
        \State Calculate $U_j\left( P_{{tx}_j} \mid r_0 \right)$ and  $P_{{tx}_j}^*(r_0)$ by Eq.\eqref{eq:utility}
        \State Calculate $V_j\left( P_{{tx}_j}^* \mid r_j \right)$ and $r^*$ by Eq.\eqref{eq:alice_utility}
        \State Update $P_{tx}^* \leftarrow P_{{tx}_j}(r^*)$
        \State \textbf{until} $\lVert r^* - r_0 \rVert < \zeta$
		\EndFor
        \State Calculate combination utility $V(w)$
		\EndFor
        \State Select $w^* \gets \underset{w \in \{1, \ldots, \mathcal{W}\}}{argmax}\; V(w)$
	\end{algorithmic}
\label{alg1}
\end{algorithm}

\begin{definition}
Stackelberg Equilibrium (SE). The policy pair \( (P_{{tx}_j}^*,\, r_j^*) \) constitutes a SE if and only if there exist optimal \( P_{{tx}_j}^* \) and \( r_j^* \) such that:
\begin{equation}
\begin{cases}
\forall P_{{tx}_j}, \quad U_j(P_{{tx}_j}^*, r_j^*) \geq U_j(P_{{tx}_j}, r_j^*) \\
\forall r_j, \quad V_j(P_{{tx}_j}^*, r_j^*) \geq V_j(P_{{tx}_j}^*, r_j)
\end{cases}.
\end{equation}
\end{definition}

We use backward induction to analyze the UAVs’ optimal decisions by computing the first and second derivatives of \( U_j \) with respect to \( P_{{tx}_j} \), as follows:
\begin{equation}
\begin{aligned}
{\partial U_j}/{\partial P_{{tx}_j}} &= {r_j}/({1 + P_{{tx}_j}}) - \phi_j, 
\end{aligned}
\end{equation}
\begin{equation}
\begin{aligned}
{\partial^2 U_j}/{\partial P_{{tx}_j}^2} &= - {r_j}/{(1 + P_{{tx}_j})^2}.
\end{aligned}
\end{equation}

Since the second derivative is always negative, \(U_j\) is strictly concave in \(P_{{tx}_j}\). The optimal solution is $P_{{tx}_j}^* = r_j / \phi_j - 1$. By setting both upper and lower bounds on the transmit power, the optimal power response policy can be expressed as:
\begin{equation}
P_{{tx}_j}^* =
\begin{cases}
P_{{tx}_{j, max}}, & P_{{tx}_j} > P_{{tx}_{j, max}} \\
\frac{r_j}{\phi_j} - 1, & P_{{tx}_{j, min}} < P_{{tx}_j} < P_{{tx}_{j, max}} \\
P_{{tx}_{j, min}}, & P_{{tx}_j} < P_{{tx}_{j, min}}
\end{cases}.
\end{equation}

\begin{figure*}[htb]
    \centering
\includegraphics[width=\textwidth]{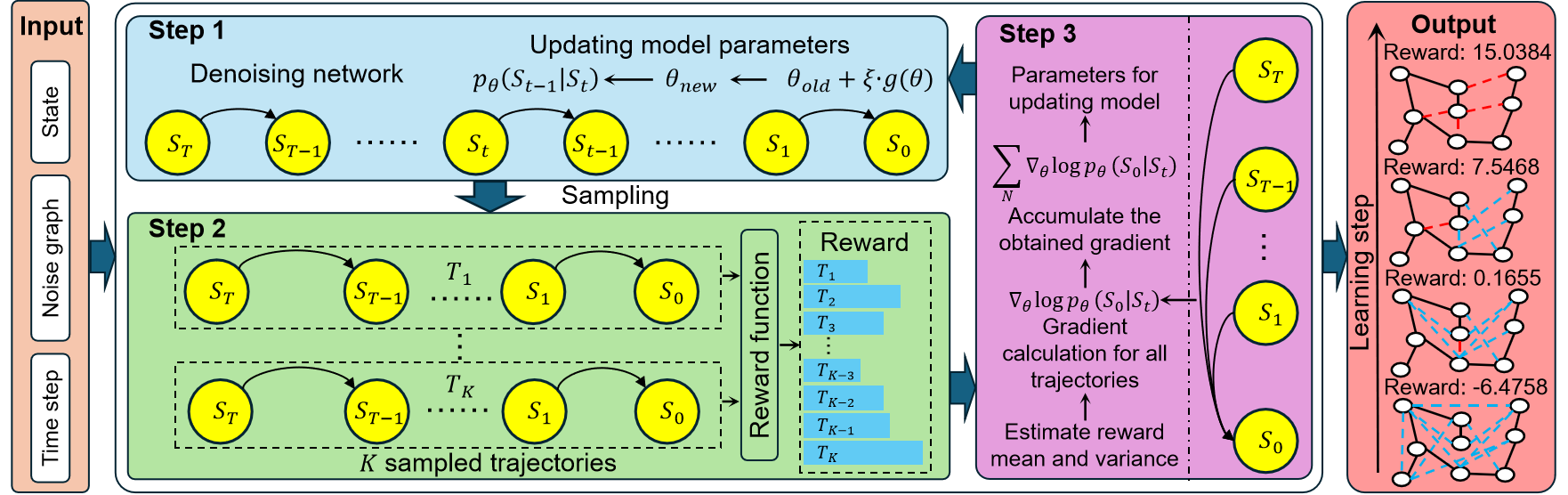}%
	\caption{UAV network topology optimization method based on GDPO. In output topologies, blue dashed lines represent redundant links, while red dashed lines indicate beneficial links. As training progresses, the reward steadily increases, demonstrating continuous improvement of the graph structure. }
	\label{Fig02}
\end{figure*}

Substituting \(P_{{tx}_j}^*\) into \eqref{eq:alice_utility}, we can acquire
\begin{equation}
\begin{aligned}
\frac{\partial V_j}{\partial r_j} 
&= \mu \left\{
    \frac{\psi g_j}{\phi_j N_0 \ln 2} \cdot 
    \frac{1}{1 + \frac{g_j}{N_0}  P_{{tx}_j}^* } \right. \\
&\quad \left. + \frac{\omega g_j^2}{\phi_j \sigma_{ {noise}}} \cdot 
    Q'\left( \frac{ \mathcal{Z}^*  }{ \sigma_{ {noise}} } \right)
\right\} 
- \ln\left( \frac{r_j}{\phi_j} \right) - 1, 
\end{aligned}
\end{equation}

\begin{equation}
\begin{aligned}
\frac{\partial^2 V_j}{\partial r_j^2} 
&= \mu \Bigg\{ 
 -\frac{ \psi g_j^2 }{ \phi_j^2 N_0^2 \ln 2 } \cdot 
 \frac{1}{ \left[ 1 + \frac{g_j}{N_0} P_{{tx}_j}^*  \right]^2 } \\
&\quad + \frac{ \omega g_j^4 }{ \phi_j^2 \sigma_{ {noise}}^2 } \cdot 
\left( \frac{ \mathcal{Z}^*  }{ \sigma_{ {noise}} } \right)
e^{ -\frac{ \left[ \mathcal{Z}^*  \right]^2 }{ 2 \sigma_{ {noise}}^2 } } 
\Bigg\}
- \frac{1}{r_j},
\end{aligned}
\end{equation}
where \(\mathcal{Z}^* \leq 0\), that is \(r \geq \phi_j \left(\frac{\epsilon}{g_j^2} + 1\right)\), the second derivative is strictly negative and \(V_j\) is concave. Thus, the optimal solution can be found via convex optimization.

\begin{theorem}
    There exists a unique SE \((P_{{tx}_j}^*, r_j^*)\) in the SG.
\end{theorem}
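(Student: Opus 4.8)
The plan is to prove existence and uniqueness by backward induction, mirroring the two-stage structure of the game: first establish a unique follower best response, then a unique leader optimum, and finally combine the two.

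First, I would treat the follower (UAV) subproblem. For any fixed reward $r_j$ announced by Alice, the concavity already derived --- namely $\partial^2 U_j/\partial P_{{tx}_j}^2 = -r_j/(1+P_{{tx}_j})^2 < 0$ --- shows that $U_j$ is strictly concave in $P_{{tx}_j}$. Over the compact feasible interval $[P_{{tx}_{j,min}}, P_{{tx}_{j,max}}]$, strict concavity forces the maximizer to be unique and to equal the clipped stationary point $P_{{tx}_j}^*(r_j)$ given in the piecewise expression. This yields a single-valued and continuous best-response map $r_j \mapsto P_{{tx}_j}^*(r_j)$, the three branches agreeing at the breakpoints.

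Second, I would analyze the leader (Alice) subproblem. Substituting $P_{{tx}_j}^*(r_j)$ into $V_j$ gives the induced objective $\tilde{V}_j(r_j) = V_j(P_{{tx}_j}^*(r_j), r_j)$ as a function of $r_j$ alone. On the interior branch $P_{{tx}_j}^* = r_j/\phi_j - 1$, the second-derivative computation shows $\partial^2 V_j/\partial r_j^2 < 0$ whenever $\mathcal{Z}^* \le 0$, i.e. $r_j \ge \phi_j(\epsilon/g_j^2 + 1)$, so $\tilde{V}_j$ is strictly concave there and the stationary condition $\partial V_j/\partial r_j = 0$ pins down a unique maximizer. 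Existence over the full compact reward interval then follows from the Weierstrass theorem (continuity plus compactness), and strict concavity upgrades existence to uniqueness, giving a single $r_j^*$. Pairing this unique leader optimum $r_j^*$ with the unique follower response $P_{{tx}_j}^*(r_j^*)$ produces an equilibrium pair $(P_{{tx}_j}^*, r_j^*)$ that satisfies both inequalities in the SE definition, and no other pair can, which establishes uniqueness.

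The main obstacle I anticipate is the interplay between the piecewise best response and the conditional concavity. Since $\tilde{V}_j$ is only shown to be concave on the interior branch under $\mathcal{Z}^* \le 0$, the delicate step is verifying that the optimal reward actually lands in this region --- either by restricting Alice's admissible rewards to $r_j \ge \phi_j(\epsilon/g_j^2 + 1)$, which is justified by the covertness constraint, or by ruling out corner solutions on the saturated branches, where $P_{{tx}_j}^*$ is constant and $\tilde{V}_j$ reduces to a monotone term in $r_j$. Confirming that the global maximizer coincides with the interior stationary point across these boundary cases is where the real care is needed; the concavity algebra itself is routine.
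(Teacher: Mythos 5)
Your proposal follows essentially the same route as the paper: backward induction, with strict concavity of $U_j$ in $P_{{tx}_j}$ yielding a unique follower best response, and concavity of $V_j$ in $r_j$ (under $\mathcal{Z}^*\le 0$) yielding a unique leader optimum, the pair of which constitutes the unique SE. In fact you are more careful than the paper's own three-line proof, which silently ignores the two issues you correctly flag --- that the concavity of $V_j$ is only conditional on $r_j \ge \phi_j(\epsilon/g_j^2+1)$ and that the saturated branches of the piecewise best response need to be ruled out.
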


\begin{proof}
    Given a reward \(r_j\), each UAV has a unique \(P_{{tx}_j}^*\) due to the concavity of Eqs. \eqref{eq:utility} and \eqref{eq:alice_utility}. Moreover, Alice has a unique optimal policy given the best responses of the UAVs. Therefore, the \((P_{{tx}_j}^*, r_j^*)\) maximizes the utilities of the UAVs and Alice, respectively, and constitutes a unique SE. 
\end{proof}

\section{GDPO-based Topology Optimization}
\label{sec:5}

\subsection{Markov Decision Process}
The UAV network topology optimization problem is formulated as a Markov Decision Process (MDP), which provides a principled framework for modeling sequential decision-making problems. The MDP is defined by a quintuple $(\mathcal{S}, \mathcal{A}, p, r_{ {topo}}, \rho_0)$, which $p$ is the transition function determining the probabilities of state transitions, $\rho _{0}$ gives the distribution of the initial state.

\textbf{State space} : In each time step $t$, the UAV and GU state space can be defined as $\mathcal{S}_t = (\mathcal{J}_t, G_t, E_t)$, where $\mathcal{J}_t=(x_{j_t}, y_{j_t}, h_{j_t}, P_{{tx_j}_t})$, $G_t=(x_{i_t}, y_{i_t})$. $E_t$ represents the set of the UAV-GU links at time step $t$.



\textbf{Action space}: At time step $t$, the action $\mathcal{A}_t$ denotes a transformation $\mathcal{S}_t \rightarrow \mathcal{S}_{t-1}$, reflecting the denoising or graph reconstruction decision. Action defined as a set of link operations $\mathcal{A}_t = \{ (e, a_t \mid e = (u, v) \in {E}_t \}$, where \( u, v \in \mathcal{J} \). The operation $a_t$ means addition, deletion, and maintenance of communication links $e$.




\textbf{Reward function} : We define a multi-objective joint optimization problem that aims to improve network coverage, energy consumption, and network connectivity. The reward function is defined as
\begin{equation}
r_{ {topo}} = \alpha \cdot r_{ {cov}} - \beta \cdot r_{ {ener}} - \gamma \cdot r_{ {conn}} - \delta \cdot r_{ {over}},
\end{equation}
where $r_{ {cov}}$, $r_{ {ener}}$, $r_{ {conn}}$, and $r_{ {over}}$ denote the rewards for UAV coverage, energy consumption, network connectivity, and GU overlapping coverage, respectively. The coefficients $\alpha$, $\beta$, $\gamma$, and $\delta$ represent the corresponding weights. The coverage reward is $r_{ {cov}} = \frac{1}{I} \sum_{i=1}^{I} D_{ {cov}}(i)$, where $D_{ {cov}}(i) = 1$ when $P_{r_i} \geq P_{min}$ and 0 otherwise. Here, $D_{ {cov}}(i)$ indicates whether GU $i$ is covered, $P_{r_i}$ is the signal power received by GU $i$, and $P_{min}$ is the minimum required power threshold. 
The energy consumption reward is $r_{ {ener}} = a \cdot r_{ {fly}} + b \cdot r_{ {tra}} + o \cdot r_{ {cha}}$, where 
$r_{ {fly}} = (\vartheta + \varrho h_j) t_f$,
$r_{ {tra}} = P_{{tx}_j} t_f$,
$r_{ {cha}} = 2 P_l t_f$, $a$, $b$, $o$, and $\vartheta$ are constants. $r_{ {fly}}$, $r_{ {tra}}$, and $r_{ {cha}}$ represent the consumption of UAV flight, signal transmission, and maintenance of communication links, respectively. $P_l$ is the link maintenance power, $\varrho$ is the altitude power coefficient, and $t_f$ is the flight time.
The connectivity reward is $r_{ {conn}} = 
\begin{cases}
0, & \|C\| = 1 \\
100, & \|C\| \neq 1
\end{cases}$, where $\|C\|$ represents the number of components connected in the UAV topology. $\|C\| = 1$ indicates full connectivity, while $\|C\| \neq 1$ indicates the presence of isolated subnetworks.
The coverage reward is $r_{ {over}} = \sum_{i=1}^{I} r_{ {over}}^{(i)}$, where $r_{ {over}}^{(i)} = 
\begin{cases}
5(m_i - 1), & m_i \geq 2 \\
0, & m_i < 2
\end{cases}$. Here, $r_{ {over}}^{(i)}$ denotes the reward for multiple UAVs covering the same GU $i$, and $m_i$ is the number of UAVs covering GU $i$.

\subsection{Topology Optimization Process}
In the topology graph representation, UAVs and communication links are treated as nodes and edges, respectively. Performance-related metrics of the network topology are used as a reward function to guide the GDPO model for policy learning. This graph-based modeling approach enables GDPO to demonstrate a stronger generalization ability and sample efficiency in complex communication scenarios with randomly distributed nodes and dynamic link conditions, showing great potential for practical applications. As shown in Fig.~\ref{Fig02}, the input to GDPO-based topology optimization includes the time step, the noisy graph, and the graph state used to initialize the denoising process. The output consists of multiple generated topologies and their corresponding reward values. The detailed algorithm is provided in \textbf{Algorithm \ref{alg02}} and proceeds as follows:

\textbf{Step 1 (Lines 2-6).} In each training round, multiple graph generation trajectories are sampled from the diffusion model. This involves gradually restoring clear graph structures from noisy states using the denoising network. The reward for each final graph is then computed, enabling initial updates of the GD model parameters.

\textbf{Step 2 (Lines 7-9).} Multiple time steps are randomly selected to sample multiple state trajectories from the current model. The final graph structure of each trajectory is evaluated using the reward function. All rewards are normalized, and the policy gradient is calculated based on the log-probability gradient.

\textbf{Step 3 (Line 10).} Through multiple iterations, the quality of the generated graph structures under the reward function is continuously improved. Gradients for all trajectories are computed and accumulated, and the model parameters are further optimized according to the learning rate.

To efficiently optimize the parameter, an efficient estimation method called Eager Policy Gradient (EPG) enables the reward signal of the final structure to directly influence the gradient update at each timestep in the GD model, thus improving training stability and convergence speed. Compared to the REINFORCE policy gradient method \cite{sutton1998reinforcement}, this mechanism exhibits better performance in handling high-dimensional sparse spaces and high-variance estimation problems often encountered in graph structure generation. In the topology generation, the EPG update in GDPO is defined as
\begin{equation}
g(\theta) \triangleq \frac{1}{K} \sum_{k=1}^{K} \frac{T}{|T_k|} \sum_{t \in T_k} r_{ {topo}}(S_0^k) \nabla_{\theta} \log p_{\theta}(S_0^k \mid S_t^k),
\label{eq:estimate_gradient}
\end{equation}
where $\theta$ represents the model parameters, $K$ denotes the total number of sampled trajectories, $T$ is the number of denoising time steps, and $\{T_k \subset (1, T)\}_{k=1}^{K}$ represents a random subset of time steps. $S_0^k$ represents the initial topology graph of the $k$-th trajectory, $S_t^k$ denotes the topology graph at time step $t$, and $r_{ {topo}}(S_0^k)$ is the normalized reward of the initial topology graph, defined as $r_{ {topo}}(S_0^k) = \frac{r_k - \bar{r}}{ {std}[r]}$, where $\bar{r} = \frac{1}{K} \sum_{k=1}^{K} r_k$ and $\textit{std}[r] = \sqrt{\frac{\sum_{k=1}^{K} (r_k - \bar{r})^2}{K - 1}}$.


\begin{algorithm}[tbp]
	\caption{GDPO for UAV Topology Optimization}
	\label{alg02}
	\begin{algorithmic}[1] 
		\Require $p_{\theta}$, $\mathit{T}$, $\left|\mathcal{T}\right|$, $r_{ {topo}}(\cdot)$, $\mathit{K}$, $\eta$, $\mathit{N}$ 
		\Ensure Optimized topology graph
		\For{$i=1,\ldots,N$}
		\For{$k=1,\ldots,K$}
        \State $\boldsymbol{S}_{0:T}^{k}\sim p_{\theta}$	\Comment{Sample trajectory}
		\State $\mathcal{T}_{k}\sim\mathrm{Uniform}(1,T)$	\Comment{Sample timesteps}
		\State $r_k\leftarrow r_{ {topo}}(\boldsymbol{S}_0^{k})$	\Comment{Get rewards}
		\EndFor
        \State Update state of UAV using \textbf{Algorithm \ref{alg01}}
		\State Estimate the EPG by Eq. \eqref{eq:estimate_gradient}
		\State Update GDPO model parameter by Eq. \eqref{eq:update_modelparameters}
		\EndFor
	\end{algorithmic}
\label{alg1}
\end{algorithm}

The model parameters are updated as
\begin{equation}
\theta \leftarrow \theta + \eta \cdot g(\theta),
\label{eq:update_modelparameters}
\end{equation}
where $\eta$ is the learning rate.


\begin{figure*}[!t]
	\centering
	\begin{minipage}{0.32\linewidth}
		\centering
		\includegraphics[width=0.8\linewidth]{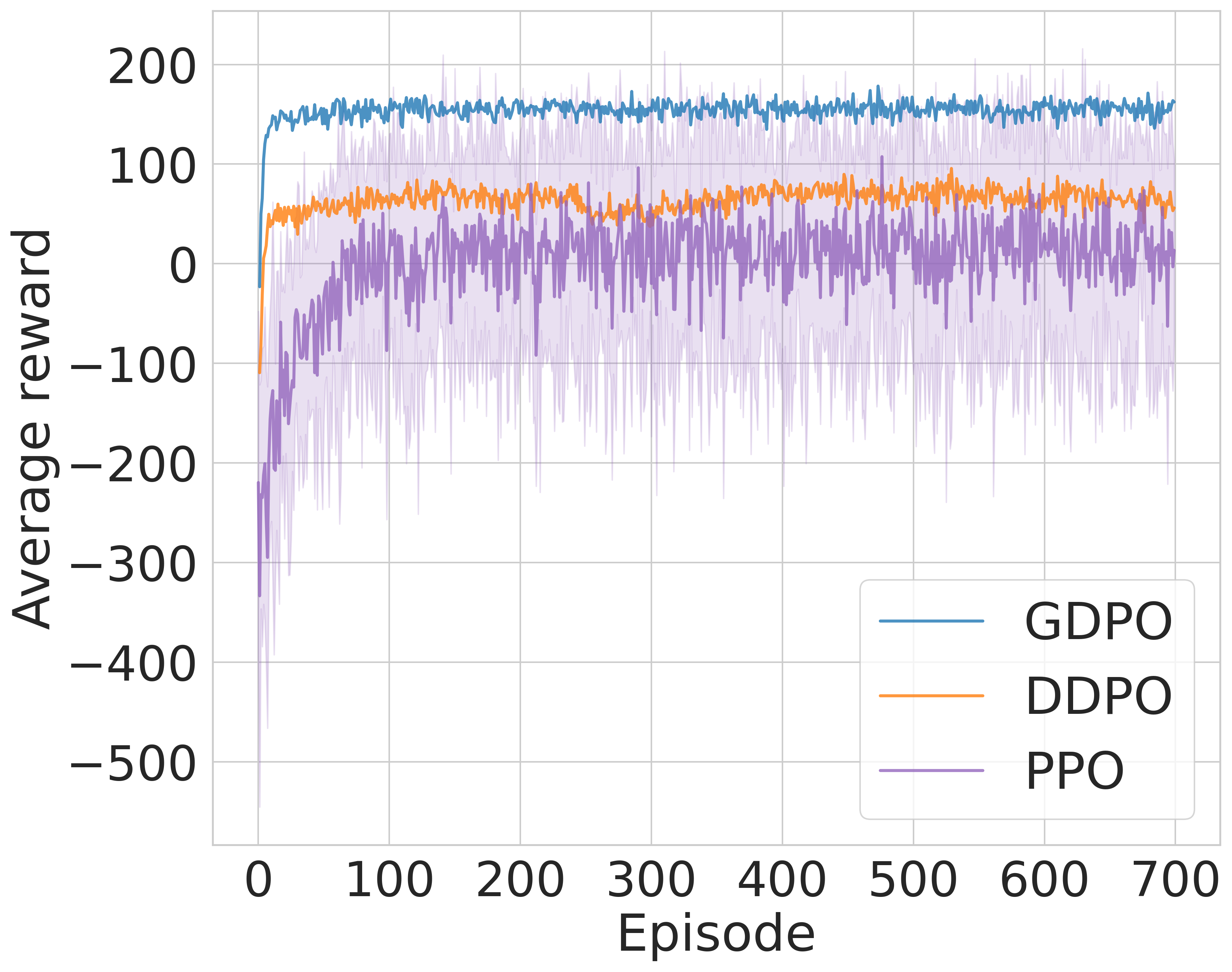}
		\caption{Reward vs. different methods.}
		\label{Fig03}
	\end{minipage}%
	\hfill
	\begin{minipage}{0.32\linewidth}
		\centering
		\includegraphics[width=0.8\textwidth]{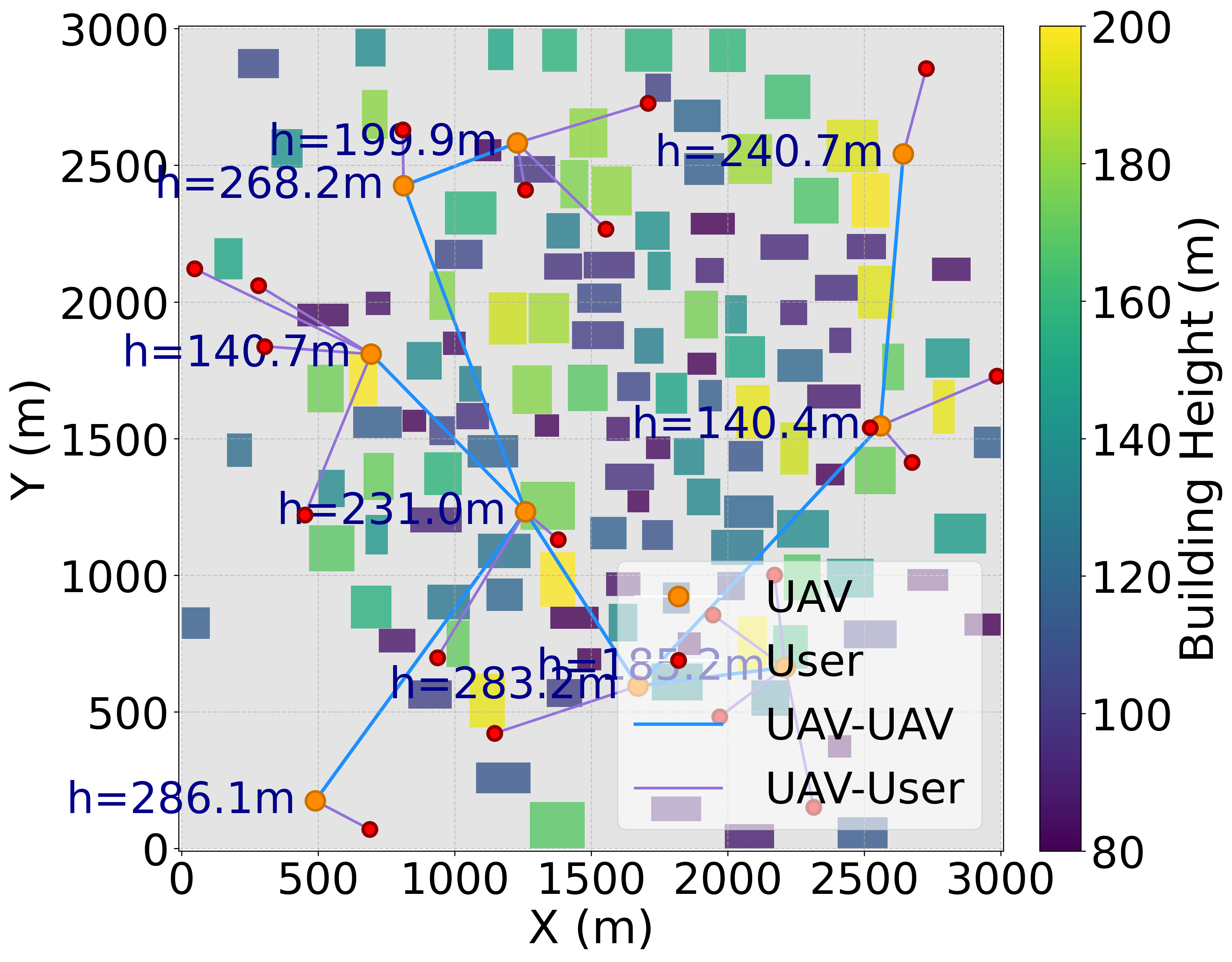}
		\caption{Topology generation }
		\label{Fig04}
	\end{minipage}%
	\hfill
	\begin{minipage}{0.32\linewidth}
		\centering
		\includegraphics[width=0.76\linewidth]{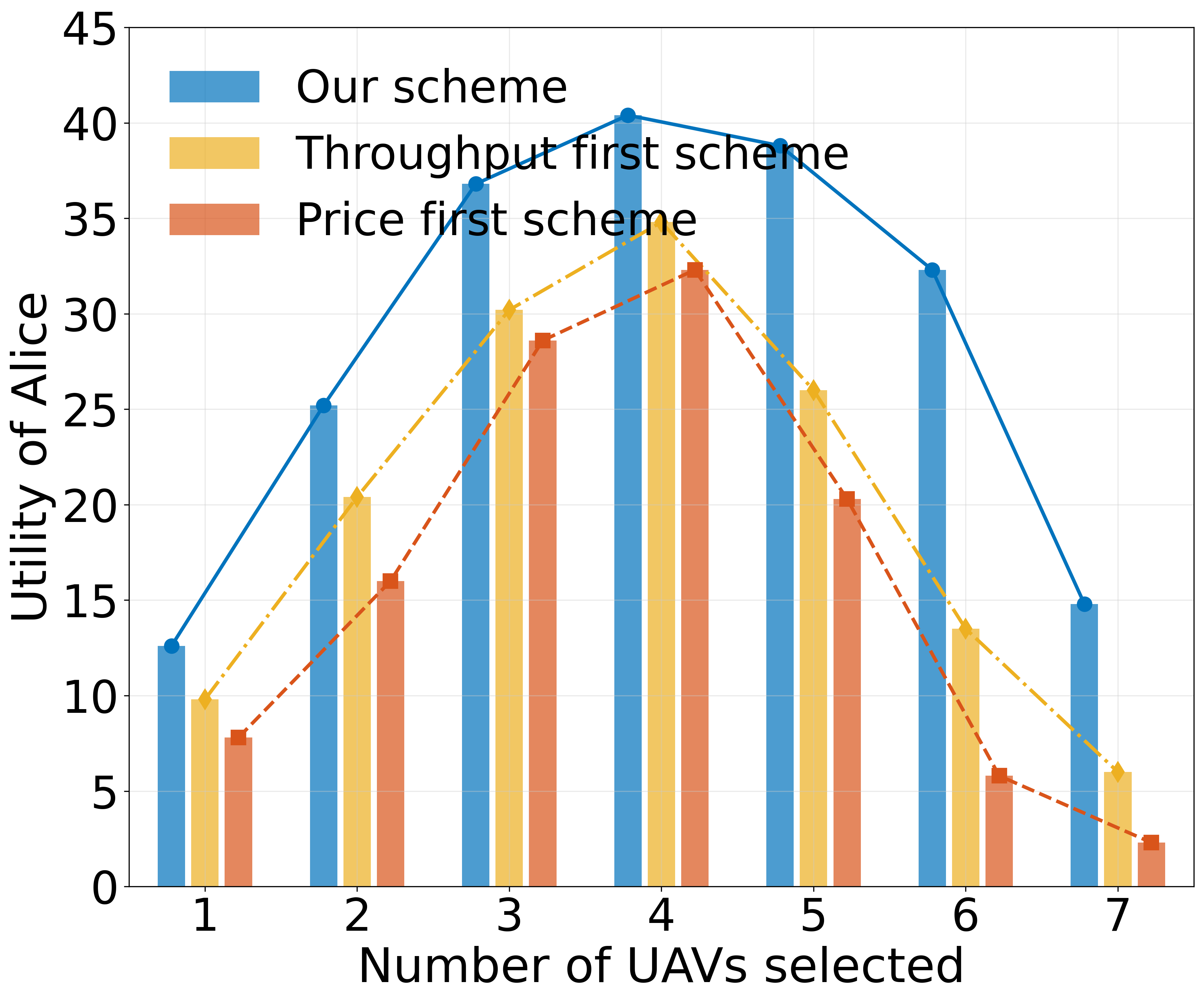}
		\caption{Utility vs. incentive mechanisms.}
		\label{Fig05}
	\end{minipage}
    \vspace{-0.4cm}
\end{figure*}

\section{Experiment evaluation}
\label{sec:6}
We consider GUs randomly distributed in a 3km × 3km urban low-altitude scenario, where the spatial positions of UAVs are determined by system parameters and a limited spatial area. To illustrate the advantages of employing GDPO for UAV network topology optimization in this study, Proximal Policy Optimization (PPO) and Dynamic Diffusion Policy Optimization (DDPO) are selected as benchmark algorithms for comparison. The parameters and their corresponding values are presented in Table \ref{tab:table1}.
\begin{table}[t]
	\caption{Experiment Parameters\label{tab:table1}}
	\centering
	\begin{tabular}{p{1.2cm}|p{2.4cm}|p{1.2cm}|p{2.4cm}}
		\hline
		Symbol & Value (Unit) & Symbol & Value (Unit) \\
		\hline
		$J$ & $9$ & $N_0$ & $1\,\text{dB}/\text{Hz}$ \\
		$I$ & $20$ & $g_i$ & $[3\,\text{dB}, 10\,\text{dB}]$ \\
		$f_c $ & $2.4\,\text{GHz}$ & $g_j$ & $1\,\text{dB}$ \\
		$P_{{tx}_{min}}$ & $10\,\text{dBm}$ & $\epsilon$ & $0.6$ \\
		$P_{{tx}_{max}}$ & $30\,\text{dBm}$  & $\sigma_{ {noise}}$ & $0.1\,\text{W}$ \\
		$P_{min}$ & $-90\,\text{dBm}$ & $h_j$ & $100 \sim 300\,\text{m}$ \\
		\hline
	\end{tabular}
\end{table}


Fig.~\ref{Fig03} shows the average rewards of GDPO, DDPO, and PPO during training. PPO converges the slowest and has the largest fluctuations, indicating high instability. DDPO converges faster than PPO but still suffers from considerable oscillations and an unsmooth learning process. In contrast, GDPO matches DDPO's convergence speed but with significantly lower post-convergence variance, demonstrating stronger stability. This improvement stems from GDPO's integration of GD sampling with policy optimization, which enhances exploration, reduces gradient variance, and better balances multiple objectives. As a result, GDPO achieves superior policy learning and convergence in dynamic environments, making it well suited for multi-objective optimization.

Fig.~\ref{Fig04} shows the link structure of a UAV network topology generated after 50 iterations in a space-constrained environment. The topology includes UAV–UAV and UAV–User links. In the initial formation stage, excessive link redundancy among UAVs increases energy consumption and undermines network stability. In contrast, GDPO adds high-efficiency links and removes redundant ones, producing an adaptive topology that meets dynamic user coverage and robust connectivity requirements in constrained spaces.

Fig.~\ref{Fig05} compares Alice’s utility under different incentive schemes with a maximum budget constraint $(Y_{max} = 50)$. When selecting four UAVs, all schemes reach peak utility, indicating an optimal balance between UAV count and budget. Although adding UAVs initially improves utility, it declines beyond the peak due to the budget limit. Notably, our approach consistently outperforms the others, as throughput-priority and cost-priority schemes fail to balance the utility function, leading to competition and lower overall gains compared to our integrated method.

\section{Conclusion}
\label{sec:7}
This paper proposes a self-organizing UAV network framework combining GDPO with a SG-based incentive mechanism. The GDPO method uses generative AI to dynamically generate sparse but well-connected topologies, enabling flexible adaptation to changing node distributions and task demands. Meanwhile, the SG guides self-interested UAVs to select optimal relay behaviors and neighbor links, sustaining cooperation and enhancing covert communication. This integrated approach improves network resilience, scalability, and covertness under dynamic, distributed conditions. Future work will focus on incorporating radio spectrum distribution awareness into UAV network optimization to further strengthen the network’s communication reliability and assurance.



\section*{Acknowledgment}

This work was supported in part by the Guangxi Natural Science Foundation of China under Grant 2025GXNSFAA069687, and in part by the Science and Technology Key Project of Guangxi Province AB23026038, and in part by the Graduate Study Abroad Program of GUET GDYX2024001.

\normalem
\vspace{-0.2cm}
\bibliographystyle{IEEEtran}
\bibliography{main}

\end{document}